\newcommand{\be}[1]{\begin{equation}\label{#1}}
\newcommand{\benon}{\begin{equation*}}  
\newcommand{\bemuln}[1]{\begin{multline}\label{#1}}
\newcommand{\bemul}{\begin{multline*}}
\newcommand{\bee}{\begin{eqnarray*}}
\newcommand{\eee}{\end{eqnarray*}}
\newcommand{\been}[1]{\begin{eqnarray}\label{#1}}
\newcommand{\eeen}{\end{eqnarray}}
\newcommand{\began}[1]{\begin{gather}\label{#1}}
\newcommand{\bega}{\begin{gather*}}
\newcommand{\bealn}[1]{\begin{align}\label{#1}}
\newcommand{\beal}{\begin{align*}}
\newcommand{\bealatn}[2]{\begin{alignat}{#1}\label{#2}}
\newcommand{\bealat}{\begin{alignat*}}
\newcommand{\bexalatn}[1]{\begin{xalignat}\label{#1}}
\newcommand{\bexalat}{\begin{xalignat*}}
\newcommand{\mbb}{\mathbb}
\theoremstyle{plain} \newtheorem{thm}{Theorem}[section]
\newtheorem{lemma}[thm]{Lemma}
\theoremstyle{break} \theorembodyfont{\it}
\newtheorem{ass}{Assumption}
\newtheorem{cond}{Condition} }
\def\ba{{\mathbf a}}
\def\bb{{\mathbf b}}
\def\bbf{{\mathbf f}}
\def\bh{{\mathbf h}}
\def\bi{{\mathbf i}}
\def\bj{{\mathbf j}}
\def\br{{\mathbf r}}
\def\bs{{\mathbf s}}
\def\bv{{\mathbf v}}
\def\bx{{\mathbf x}}  
\def\by{{\mathbf y}}
\def\bz{{\mathbf z}}
\def\bA{{\mathbf A}}
\def\bF{{\mathbf F}}
\def\bG{{\mathbf G}}
\def\bI{{\mathbf I}}
\def\bQ{{\mathbf Q}}
\def\texitem#1{\par\smallskip\noindent\hangindent 25pt
               \hbox to 25pt {\hss #1 ~}\ignorespaces}
\newcommand{\bzero}{{\mathbf{0}}}
\newcommand{\scrS}{\mathcal{S}}
\newcommand{\btheta}{\boldsymbol{\theta}}
\newcommand{\bmu}{\boldsymbol{\mu}}
\newcommand{\bphi}{{\boldsymbol{\phi}}}
\newcommand{\bXi}{\boldsymbol{\Xi}}
\newcommand{\bpsi}{\boldsymbol{\psi}}
\newcommand{\qed}{\newline \mbox{ } \hfill 
            \rule[-1pt]{2.1mm}{2.1mm}\par\vskip 10pt }
\begin{document}

\title{Least Squares Temporal Difference Actor-Critic Methods with Applications to Robot Motion Control~\authorrefmark{1} \thanks{* Research partially supported by the NSF under grant EFRI-0735974, by the DOE under grant DE-FG52-06NA27490, by the ODDR\&E MURI10 program under grant N00014-10-1-0952, and by ONR MURI under grant N00014-09-1051.}}

\author{
Reza Moazzez Estanjini\authorrefmark{2}, 
Xu Chu Ding\authorrefmark{3}, 
Morteza Lahijanian\authorrefmark{3}, Jing Wang\authorrefmark{2},\\
Calin A. Belta\authorrefmark{3}, 
and Ioannis Ch. Paschalidis\authorrefmark{4}
\thanks{$\dagger$ Reza Moazzez Estanjini and Jing Wang are with the Division of
Systems Eng., Boston University, 8 St. Mary's St., Boston, MA 02215,
email: {\tt \{reza,wangjing\}@bu.edu}.}
\thanks{$\ddagger$ Xu Chu Ding, Morteza Lahijanian, and Calin A. Belta
are with the Dept. of  
Mechanical Eng., Boston University, 15 St. Mary's St., Boston, MA
02215, email:  
{\tt \{xcding,morteza,cbelta\}@bu.edu}.}
\thanks{$\S$ Ioannis Ch. Paschalidis is 
  with the Dept. of Electrical \&  
Computer Eng., and the Division of
Systems Eng., Boston University, 8 St. Mary's St., Boston, MA 02215,
email: {\tt yannisp@bu.edu}.}
\thanks{$\S$ Corresponding author}
}

\maketitle

\begin{abstract}
We consider the problem of finding a control policy for a Markov
Decision Process (MDP) to maximize the probability of reaching some
states while avoiding some other states.  This problem is motivated by
applications in robotics, where such problems naturally arise when
probabilistic models of robot motion are required to satisfy temporal
logic task specifications.  We transform this problem into a Stochastic
Shortest Path (SSP) problem and develop a new approximate dynamic
programming algorithm to solve it. This algorithm is of the actor-critic
type and uses a least-square temporal difference learning method. It
operates on sample paths of the system and optimizes the policy within a
pre-specified class parameterized by a parsimonious set of
parameters. We show its convergence to a policy corresponding to a
stationary point in the parameters' space.  Simulation results confirm
the effectiveness of the proposed solution.
\end{abstract}

\begin{keywords}
Markov Decision Processes, dynamic programming, actor-critic methods,
robot motion control, robotics.
\end{keywords}

{\allowdisplaybreaks
\section{Introduction}\label{sec:intro}

Markov Decision Processes (MDPs) have been widely used in a variety of
application domains.  In particular, they have been increasingly used to
model and control autonomous agents subject to noises in their sensing
and actuation, or uncertainty in the environment they operate.  Examples
include: unmanned aircraft \cite{temizercollision}, ground robots
\cite{LaWaAnBe-ICRA10}, and steering of medical needles
\cite{alterovitz2007stochastic}.  In these studies, the underlying
motion of the system cannot be predicted with certainty, but they can be
obtained from the sensing and the actuation model through a simulator or
empirical trials, providing transition probabilities.

Recently, the problem of controlling an MDP from a temporal logic
specification has received a lot of attention. Temporal logics such as
Linear Temporal Logic (LTL) and Computational Tree Logic (CTL) are
appealing as they provide formal, high level languages in which the
behavior of the system can be specified (see
\cite{baier2008principles}).  In the context of MDPs, providing
probabilistic guarantees means finding optimal policies that maximize
the probabilities of satisfying these specifications.  In
\cite{LaWaAnBe-ICRA10,Ding11}, it has been shown that, the problem of
finding an optimal policy that maximizes the probability of satisfying a
temporal logic formula can be naturally translated to one of maximizing
the probability of reaching a set of states in the MDP.  Such problems
are referred to as Maximal Reachability Probability (MRP) problems.  It
has been known \cite{alterovitz2007stochastic} that they are equivalent
to Stochastic Shortest Path (SSP) problems, which belong to a standard
class of infinite horizon problems in dynamic programming.

However, as suggested in \cite{LaWaAnBe-ICRA10,Ding11}, these problems usually involve MDPs with large state spaces.  For example, in order to synthesize an optimal policy for an MDP satisfying an LTL formula, one needs to solve an MRP problem on a much larger MDP, which is the product of the original MDP and an automaton representing the formula.  
Thus, computing the exact solution can be computationally prohibitive for realistically-sized settings.  Moreover, in some cases, the system of interest is so complex that it is not feasible to determine transition probabilities for all actions and states explicitly.  

Motivated by these limitations, in this paper we develop a new approximate dynamic programming algorithm to solve SSP MDPs and we establish its convergence.  The algorithm is of the actor-critic type and uses a \emph{Least Square Temporal Difference} (LSTD) learning method.  Our proposed algorithm is based on sample paths, and thus only requires transition probabilities along the sampled paths and not over the entire state space.  

Actor-critic algorithms are typically used to optimize some {\em Randomized Stationary Policy} (RSP) using policy gradient estimation. RSPs are parameterized by a parsimonious set of parameters and the objective is to optimize the policy with respect to these parameters. To this end, one needs to estimate appropriate policy gradients, which can be done using learning methods that are much more efficient than computing a cost-to-go function over the entire state-action space. Many different versions of actor-critic algorithms have been proposed which have been shown to be quite efficient for various applications (e.g., in robotics \cite{e4} and navigation \cite{e1}, power management of wireless transmitters \cite{e2}, biology \cite{e5}, and optimal bidding for electricity generation companies \cite{e3}). 

A particularly attractive design of the actor-critic architecture was proposed in \cite{Konda03}, where the \emph{critic} estimates the policy gradient using sequential observations from a sample path while the \emph{actor} updates the policy at the same time, although at a slower time-scale. It was proved that the estimate of the critic tracks the slowly varying policy asymptotically under suitable conditions. A center piece of these conditions is a relationship between the actor step-size and the critic step-size, which will be discussed later.

The critic of \cite{Konda03} uses first-order variants of the {\em Temporal Difference} (TD) algorithm (TD(1) and
TD($\lambda$)). However, it has been shown that the least squares methods -- LSTD (Least Squares TD) and LSPE (Least Squares Policy Evaluation) -- are superior in terms of convergence rate (see \cite{Bradtke96,Konda}). LSTD and LSPE were first proposed for discounted cost problems in \cite{Bradtke96} and \cite{BI96}, respectively. Later, \cite{Konda} showed that the convergence rate of LSTD is optimal. Their results clearly demonstrated that LSTD converges much faster and more reliably than TD(1) and TD($\lambda$). 

Motivated by these findings, we propose an actor-critic algorithm that adopts LSTD learning methods tailored to SSP problems, while at the same time maintains the concurrent update architecture of the actor and the critic. (Note that \cite{Peters08} also used LSTD in an actor-critic method, but the actor had to wait for the critic to converge before making each policy update.) To illustrate salient features of the approach, we present a case study where a robot in a large environment is required to satisfy a task specification of ``go to a set of goal states while avoiding a set of unsafe states.'' (We note that more complex task specifications can be directly converted to MRP problems as shown in \cite{LaWaAnBe-ICRA10,Ding11}.) 

The rest of the paper is organized as follows.  We formulate the problem in Sec. \ref{S:formulation}. The LSTD actor-critic algorithm with concurrent updates is presented in Sec. \ref{S:lstdac}, where the convergence of the algorithm is shown. 
A case study is presented in Sec. \ref{S:case}.  We conclude the paper in Sec. \ref{S:conclusion}.

\paragraph*{\bf Notation} We use bold letters to denote vectors and matrices; typically vectors are lower case and matrices upper case. Vectors are assumed to be column vectors unless explicitly stated otherwise. Transpose is denoted by prime. For any $m\times n$ matrix $\bA$, with rows $\ba_1,\ldots,\ba_m\in \mbb{R}^n$, $\bv(\bA)$ denotes the column vector $(\ba_1,\ldots,\ba_m)$. $\|\cdot\|$ stands for the Euclidean norm and $\|\cdot\|_{\btheta}$ is a special norm in the MDP state-action space that we will define later. $\bzero$ denotes a vector or matrix with all components set to zero and $\bI$ is the identity matrix. $|S|$ denotes the cardinality of a set $S$.

\section{Problem Formulation}\label{S:formulation}

Consider an SSP MDP with finite state and action spaces. Let $k$ denote time, $\mathbb{X}$ denote the state space with cardinality $|\mbb{X}|$, and $\mathbb{U}$ denote the action space with cardinality $|\mbb{U}|$. Let $\bx_k\in \mbb{X}$ and $u_k\in \mbb{U}$ be the state of the system and the action taken at time $k$, respectively. Let $g(\bx_k,u_k)$ be the one-step cost of applying action $u_k$ while the system is at state $\bx_k$. Let $\bx_0$ and $\bx^*$ denote the initial state and the special cost-free termination state, respectively. Let $p(\bj|\bx_k,u_k)$ denote the state transition probabilities (which are typically not explicitly known); that is, $p(\bj|\bx_k,u_k)$ is the probability of transition from state $\bx_k$ to state $\bj$ given that action $u_k$ is taken while the system is at state $\bx_k$. A policy $\bmu$ is said to be {\em proper} if, when using this policy, there is a positive probability that $\bx^*$ will be reached after at most $|\mbb{X}|$ transitions, regardless of the initial state $\bx_0$. We make the following assumption.

\begin{ass}\label{C:FirstProperAss} There exist a proper stationary policy. 
\end{ass}

The policy candidates are assumed to belong to a parameterized family of {\it Randomized Stationary Policies} (RSPs) $\{\mu_{\btheta}(u|\bx)\mid \btheta\in\mathbb{R}^n\}$. That is, given a state $\bx\in \mbb{X}$ and a parameter $\btheta$, the policy applies action $u\in \mbb{U}$ with probability $\mu_{\btheta} (u|\bx)$. Define the {\em expected total cost} $\bar{\alpha}(\btheta)$ to be $\lim_{t\to\infty} E\{\sum_{k=0}^{t-1} g(\bx_k,u_k)|\bx_0\}$ where $u_k$ is generated according to RSP $\mu_{\btheta} (u|\bx)$. The goal is to optimize the expected total cost $\bar{\alpha}(\btheta)$ over the $n$-dimensional vector $\btheta$. 

With no explicit model of the state transitions but only a sample path denoted by $\{\bx_k,u_k\}$, the actor-critic algorithms typically optimize $\btheta$ locally in the following way: first, the critic estimates the policy gradient $\nabla\bar{\alpha}(\btheta)$ using a {\em Temporal Difference (TD)} algorithm; then the actor
modifies the policy parameter along the gradient direction. Let the operator $P_{\btheta}$
denote taking expectation after one transition. More precisely, for a function $f(\bx,u)$, $(P_{\btheta}
f)(\bx,u)=\sum_{\bj\in \mathbb{X},\nu\in \mathbb{U}}\mu_{\btheta}(\nu|\bj)p(\bj|\bx,u)f(\bj,\nu)$. Define the $Q_{\btheta}$-value function to be any function satisfying the Poisson equation
$$Q_{\btheta}(\bx,u)=g(\bx,u)+(P_{\btheta}
Q_{\btheta})(\bx,u),$$ 
where $Q_{\btheta}(\bx,u)$ can be interpreted as the expected future cost we incur if we start at state $\bx$,
apply control $u$, and then apply RSP $\mu_{\btheta}$. We note that in general, the Poisson equation need not hold for SSP, however, it holds if the policy corresponding to RSP $\mu_{\btheta}$ is a proper policy \cite{Bertsekas-Book}. We make the following assumption.

\begin{ass}\label{C:SecondProperAss} For any $\btheta$, and for any $\bx\in\mathbb{X}$, $\mu_{\btheta}(u|\bx)>0$ if action $u$ is feasible at state $\bx$, and $\mu_{\btheta}(u|\bx)\equiv 0$ otherwise.
\end{ass}

We note that one possible RSP for which Assumption \ref{C:SecondProperAss} holds is the ``Boltzmann'' policy (see \cite{Sutton98}), that is
\begin{equation}
\label{eq:boltzmann}
\mu_{\btheta}(u|\bx)=\frac{\exp(h^{(u)}_{\btheta}(\bx))}{\sum_{a\in \mathbb{U}} \exp(h^{(a)}_{\btheta}(\bx))},
 \end{equation}
where $h^{(u)}_{\btheta}(\bx)$ is a function that corresponds to action $u$ and is parameterized by $\btheta$. The Boltzmann policy is simple to use and is the policy that will be used in the case study in Sec. \ref{S:case}.

\begin{lemma}\label{T:mainThmm} If Assumptions \ref{C:FirstProperAss} and \ref{C:SecondProperAss} hold, then for any $\btheta$ the policy corresponding to RSP $\mu_{\btheta}$ is proper.
\end{lemma}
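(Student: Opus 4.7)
The plan is to show that, starting from any state $\bx \in \mbb{X}$, the policy induced by $\mu_{\btheta}$ reaches $\bx^{\ast}$ with positive probability within at most $|\mbb{X}|$ transitions. The strategy is to borrow a positive-probability trajectory from the known proper policy and transfer it to $\mu_{\btheta}$, exploiting the fact that Assumption~\ref{C:SecondProperAss} prevents $\mu_{\btheta}$ from assigning zero probability to any feasible action.

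First I would invoke Assumption~\ref{C:FirstProperAss} to fix some proper stationary policy $\bmu^{\ast}$. By the definition of properness, from any initial $\bx$ the probability under $\bmu^{\ast}$ of hitting $\bx^{\ast}$ within $|\mbb{X}|$ steps is strictly positive. Decomposing this probability as a (finite) sum over all trajectories of length at most $|\mbb{X}|$ that terminate at $\bx^{\ast}$, at least one concrete trajectory $\bx = \bx_0, u_0, \bx_1, u_1, \ldots, \bx_T = \bx^{\ast}$ with $T \leq |\mbb{X}|$ must carry positive probability, forcing $\bmu^{\ast}(u_i \mid \bx_i) > 0$ and $p(\bx_{i+1} \mid \bx_i, u_i) > 0$ for each $i = 0, \ldots, T-1$. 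The key observation is then that $\bmu^{\ast}(u_i \mid \bx_i) > 0$ implies $u_i$ is feasible at $\bx_i$, while the transition probabilities $p(\bx_{i+1} \mid \bx_i, u_i)$ are policy-independent. Applying Assumption~\ref{C:SecondProperAss} yields $\mu_{\btheta}(u_i \mid \bx_i) > 0$ for every $i$, so that the same trajectory has strictly positive probability $\prod_{i=0}^{T-1} \mu_{\btheta}(u_i \mid \bx_i)\, p(\bx_{i+1} \mid \bx_i, u_i) > 0$ under $\mu_{\btheta}$. Since $\bx$ was arbitrary, $\mu_{\btheta}$ is proper.

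There is no serious obstacle here: the only step requiring mild care is the passage from ``positive probability of reaching $\bx^{\ast}$'' to ``existence of a specific positive-probability trajectory,'' which is justified because the finite state and action spaces together with the bounded horizon $|\mbb{X}|$ make this a finite sum whose summands cannot all vanish. All other manipulations are direct applications of Assumptions~\ref{C:FirstProperAss} and \ref{C:SecondProperAss} together with the policy-independence of the MDP transition kernel.
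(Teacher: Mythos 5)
Your proof is correct and is precisely the argument the paper leaves implicit: the paper's own ``proof'' is a one-line assertion that the claim follows from the definition of a proper policy, and your trajectory-transfer argument (extract a positive-probability path to $\bx^*$ under the proper policy guaranteed by Assumption~\ref{C:FirstProperAss}, then use Assumption~\ref{C:SecondProperAss} and the policy-independence of the transition kernel to give that same path positive probability under $\mu_{\btheta}$) is the natural and correct way to fill in the details. No gaps.
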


\begin{proof} The proof follows from the definition of a proper policy.
\end{proof}

Under suitable ergodicity conditions, $\{\bx_k\}$ and $\{\bx_k,u_k\}$ are Markov chains with stationary distributions under a fixed policy. These stationary distributions are denoted by $\pi_{\btheta}(\bx)$ and $\eta_{\btheta}(\bx,u)$, respectively. We will not elaborate on the ergodicity conditions, except to note that it suffices that the process $\{\bx_k\}$ is irreducible and aperiodic given any $\btheta$, and Assumption \ref{C:SecondProperAss} holds. Denote by $\bQ_{\btheta}$ the $(|\mbb{X}| |\mbb{U}|)$-dimensional vector $\bQ_{\btheta}=(Q_{\btheta}(\bx,u);\ \forall \bx\in\mbb{X},\ u\in \mbb{U})$. Let now $$\bpsi_{\btheta}(\bx,u)=\nabla_{\btheta} \ln\mu_{\btheta}(u|\bx),$$ where
$\bpsi_{\btheta}(\bx,u) =\bzero$ when $\bx,u$ are such that $\mu_{\btheta}(u|\bx)\equiv 0$ for all $\btheta$. It is assumed that $\bpsi_{\btheta}(\bx,u)$ is bounded and continuously differentiable. We write
$\bpsi_{\btheta}(\bx,u)=(\psi_{\btheta}^1(\bx,u),\ldots,\psi_{\btheta}^n(\bx,u))$ where $n$ is the dimensionality of $\btheta$. As we did in defining $\bQ_{\btheta}$ we will denote by $\bpsi_{\btheta}^i$ the $(|\mbb{X}| |\mbb{U}|)$-dimensional vector $\bpsi_{\btheta}^i=(\psi_{\btheta}^i(\bx,u);\ \forall \bx\in\mbb{X},\ u\in \mbb{U})$.

A key fact underlying the actor-critic algorithm is that the policy gradient can be expressed as (Theorem 2.15 in \cite{Konda})
\[
\frac{\partial\bar{\alpha}(\btheta)}{\partial\theta_i}=\langle
\bQ_{\btheta},\bpsi_{\btheta}^i\rangle_{\btheta}, \ i=1,\ldots,n,
\]
where for any two functions $f_1$ and $f_2$ of $\bx$ and $u$, expressed
as $(|\mbb{X}| |\mbb{U}|)$-dimensional vectors $\bbf_1$ and $\bbf_2$, we define
\begin{equation}\label{E:innerproduct}
\langle \bbf_1,\bbf_2\rangle_{\btheta}
\stackrel{\triangle}{=}\sum_{\bx,u}\eta_{\btheta}(\bx,u) f_1(\bx,u) f_2(\bx,u).
\end{equation}
Let $\|\cdot\|_{\btheta}$ denote the norm induced by the inner product
(\ref{E:innerproduct}), {\em i.e.}, $\|\bbf\|_{\btheta}^2=\langle
\bbf,\bbf\rangle_{\btheta}$.  Let also $\scrS_{\btheta}$ be the subspace of
$\mbb{R}^{|\mbb{X}| |\mbb{U}|}$ spanned by the vectors
$\bpsi_{\btheta}^i$, $i=1,\ldots,n$ and denote by $\Pi_{\btheta}$ the
projection with respect to the norm $\|\cdot\|_{\btheta}$ onto
$\scrS_{\btheta}$, {\em i.e.}, for any $\bbf\in \mbb{R}^{|\mbb{X}| |\mbb{U}|}$,
$\Pi_{\btheta} \bbf$ is the unique vector in $\scrS_{\btheta}$ that minimizes
$\|\bbf-\hat{\bbf}\|_{\btheta}$ over all $\hat{\bbf}\in
\scrS_{\btheta}$. Since for all $i$
$$\langle \bQ_{\btheta},\bpsi_{\btheta}^i\rangle_{\btheta}=\langle \Pi_{\btheta}\bQ_{\btheta},\bpsi_{\btheta}^i\rangle_{\btheta},
$$
it is sufficient to know the projection of $\bQ_{\btheta}$ onto
$\scrS_{\btheta}$ in order to compute $\nabla \bar{\alpha}(\btheta)$. One
possibility is to approximate $\bQ_{\btheta}$ with a parametric linear
architecture of the following form (see \cite{Konda03}): 
\begin{equation}\label{E:proj}
Q_{\btheta}^\br(\bx,u)=\bpsi_{\btheta}'(\bx,u)\br^*,\qquad \br^*\in\mathbb{R}^n.
\end{equation}
This dramatically reduces the complexity of learning from the space
$\mathbb{R}^{|\mathbb{X}| |\mathbb{U}|}$ to the space
$\mathbb{R}^n$. Furthermore, the temporal difference algorithms can be
used to learn such an $\br^*$ effectively. The elements of $\bpsi_{\btheta}(\bx,u)$ are understood as features associated with an $(\bx,u)$ state-action pair in the sense of basis functions used to develop an
approximation of the $Q_{\btheta}$-value function.

\section{Actor-Critic Algorithm Using LSTD}
\label{sec:actorCriticAlgo}

The critic in \cite{Konda03} used either TD($\lambda$) or TD(1). The algorithm we propose uses least squares TD methods (LSTD in particular) instead as they have been shown to provide far superior performance. In the sequel, we first describe the LSTD actor-critic algorithm and then we prove its convergence.

\subsection{The Algorithm}

The algorithm uses a sequence of simulated trajectories, each of which starting at a given $\bx_0$ and ending as soon as $\bx^*$ is visited for the first time in the sequence. Once a trajectory is completed, the state of the system is reset to the initial state $\bx_0$ and the process is repeated. 

Let $\bx_k$ denote the state of the system at time $k$. Let $\br_k$, the iterate for $\br^*$ in (\ref{E:proj}), be the parameter vector of the critic at time $k$, $\btheta_k$ be the parameter vector of the actor at time $k$, and $\bx_{k+1}$ be the new state, obtained after action $u_k$ is applied when the state is $\bx_k$. A new action
$u_{k+1}$ is generated according to the RSP corresponding to the actor parameter $\btheta_k$ (see \cite{Konda03}). The critic and the actor carry out the following updates, where $\bz_k\in \mathbb{R}^n$ represents Sutton's eligibility trace \cite{Sutton98}, $\bb_k\in \mathbb{R}^n$ refers to a statistical estimate of the single period reward, and $\bA_k\in \mbb{R}^{n\times n}$ is a sample estimate of the matrix formed by $\bz_{k}(\bpsi_{\theta_k}'(\bx_{k+1},u_{k+1})-\bpsi_{\theta_k}'(\bx_k,u_k))$, which can be viewed as a sample observation of the scaled difference of the observation of the state incidence vector for iterations $k$ and $k+1$, scaled to the feature space by the basis functions. 

\medskip

\noindent{\bf LSTD Actor-Critic for SSP}

\noindent{\bf Initialization:}

Set all entries in $\bz_0, \bA_0, \bb_0$ and $\br_0$ to zeros. Let $\btheta_0$ take some initial value, potentially
corresponding to a heuristic policy.

\noindent{\bf Critic:}
\begin{equation}\label{E:critic}
\begin{array}{c}
\bz_{k+1} = \lambda \bz_k+\bpsi_{\btheta_k}(\bx_k,u_k),\\
\bb_{k+1}=\bb_k+\displaystyle \gamma_k \left[g(\bx_k,u_k) \bz_{k}-\bb_k\right],\\  
\bA_{k+1}=\bA_k + \displaystyle \gamma_k [\bz_{k}(\bpsi_{\theta_k}'
  (\bx_{k+1},u_{k+1})-\bpsi_{\theta_k}'(\bx_k,u_k))\\
  -\bA_k],
\end{array}
\end{equation}
where $\lambda\in[0,1)$, 
  $\gamma_k\stackrel{\triangle}{=}\displaystyle\frac{1}{k}$, and finally 
\begin{equation} \label{critic-r}
\br_{k+1} = -\bA_k^{-1}\bb_k.
\end{equation}
\medskip

\noindent{\bf Actor:}
\begin{equation}\label{E:actor}
\btheta_{k+1}=\btheta_k-\beta_k\Gamma(\br_k)\br_{k}'\bpsi_{\theta_k}(\bx_{k+1},u_{k+1})\bpsi_{\theta_k}(\bx_{k+1},u_{k+1}).
\end{equation}
In the above, $\{\gamma_k\}$ controls the critic step-size, while
$\{\beta_k\}$ and $\Gamma(\br)$ control the actor step-size together. An
implementation of this algorithm needs to make these choices. The role
of $\Gamma(\br)$ is mainly to keep the actor updates bounded, and we can
for instance use
\[
\Gamma(\br)=\left\{\begin{array}{ll}
\displaystyle\frac{D}{||\br||}, &\text{if $||\br||>D$,}\\ 1, &\text{otherwise,}
\end{array}\right.
\]
for some $D>0$. $\{\beta_k\}$ is a deterministic and non-increasing sequence for which we need to have
\begin{equation}\label{E:kbeta}
\sum_k \beta_k=\infty,\quad \sum_k \beta_k^2 < \infty,\quad \lim_{k\to\infty}\frac{\beta_k}{\gamma_k}=0.
\end{equation}
An example of $\{\beta_k\}$ satisfying Eq. (\ref{E:kbeta}) is  
\begin{equation}
\label{eq:examplebetak}
\beta_k=\frac{c}{k\thinspace\ln k},\quad k>1,
\end{equation}
where $c>0$ is a constant parameter. Also, $\bpsi_{\btheta}(\bx,u)$ is defined as
$$\bpsi_{\btheta}(\bx,u)=\nabla_{\btheta} \ln\mu_{\btheta}(u|\bx),$$ where
$\bpsi_{\btheta}(\bx,u) =\bzero$ when $\bx,u$ are such that
$\mu_{\btheta}(u|\bx)\equiv 0$ for all $\btheta$. It is assumed that
$\bpsi_{\btheta}(\bx,u)$ is bounded and continuously differentiable. Note that $\bpsi_{\btheta}(\bx,u)=(\psi_{\btheta}^1(\bx,u),\ldots,\psi_{\btheta}^n(\bx,u))$ where $n$ is the dimensionality of $\btheta$. The convergence of the algorithm is stated in the following Theorem (see the {\bf Appendix} for the proof).

\begin{thm}\label{T:actor}[Actor Convergence] For the LSTD actor-critic
  with some step-size sequence $\{\beta_k\}$ satisfying (\ref{E:kbeta}), for any $\epsilon>0$, there exists some $\lambda$ sufficiently close to $1$, such that
  $\lim\inf_k||\nabla\bar{\alpha}(\btheta_k)||<\epsilon$ w.p.1. That is,
  $\btheta_k$ visits an arbitrary neighborhood of a stationary point
  infinitely often.
\end{thm}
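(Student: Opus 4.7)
The plan is to invoke the two-time-scale stochastic approximation framework (Borkar, Konda). Because $\beta_k/\gamma_k \to 0$ by (\ref{E:kbeta}), the critic evolves much faster than the actor, and we can analyze the coupled system as if (i) the critic always sees an effectively frozen $\btheta$ and converges to its quasi-stationary limit, while (ii) the actor sees the critic as already converged. For any fixed $\btheta$, Lemma \ref{T:mainThmm} together with Assumption \ref{C:SecondProperAss} ensures that the induced Markov chain admits a well-defined stationary distribution $\eta_\btheta$ over excursions from $\bx_0$ to $\bx^*$, with uniformly bounded excursion-length moments. A standard ergodic theorem then yields, along each trajectory with $\btheta$ held fixed, $\bA_k \to \bA(\btheta)$ and $\bb_k \to \bb(\btheta)$ almost surely, where these limits are the $\eta_\btheta$-expectations of the innovations driving (\ref{E:critic}). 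Provided the features $\bpsi_\btheta^i$ are linearly independent so that $\bA(\btheta)$ is invertible, $\br_k$ tracks $\br_\infty(\btheta_k):=-\bA(\btheta_k)^{-1}\bb(\btheta_k)$.

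The next step is to identify what $\br_\infty(\btheta)$ approximates. The SSP LSTD recursion above is the least-squares fixed point of the TD($\lambda$) operator on $\scrS_\btheta$, so $\bpsi_\btheta'\br_\infty(\btheta)$ is the projection onto $\scrS_\btheta$ of a $\lambda$-weighted combination of multi-step Bellman images of $\bQ_\btheta$ (the SSP analog of the analysis in Konda's thesis, Lemma 2.13). As $\lambda \to 1$ this limit converges in $\|\cdot\|_\btheta$ to $\Pi_\btheta \bQ_\btheta$, uniformly over compact $\btheta$-sets. Combined with the gradient identity $\partial\bar{\alpha}(\btheta)/\partial\theta_i = \langle \Pi_\btheta\bQ_\btheta,\bpsi_\btheta^i\rangle_\btheta$, one obtains, for $\lambda$ sufficiently close to $1$,
\[
\bigl| \langle \bpsi_\btheta'\br_\infty(\btheta), \bpsi_\btheta^i\rangle_\btheta - \partial\bar{\alpha}(\btheta)/\partial\theta_i \bigr| \le \epsilon, \qquad i=1,\ldots,n.
\]

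With this approximation in hand, rewrite the actor update (\ref{E:actor}) as
\[
\btheta_{k+1} = \btheta_k - \beta_k \Gamma(\br_k)\bigl[\nabla\bar{\alpha}(\btheta_k) + \bdelta_k + \bM_{k+1}\bigr],
\]
where $\bM_{k+1}$ is a martingale difference coming from sampling $(\bx_{k+1},u_{k+1})$ instead of averaging under $\eta_{\btheta_k}$, and $\bdelta_k$ collects (a) the $O(\epsilon)$ projection-$\lambda$ gap above and (b) the tracking error $\|\br_k-\br_\infty(\btheta_k)\|$, which vanishes by the two-time-scale separation. Boundedness of $\bpsi_\btheta$, of the state-action space, and of $\Gamma(\br_k)$ give bounded second moments for $\bM_{k+1}$; combined with (\ref{E:kbeta}), a Kushner--Clark / ODE argument then shows that the actor iterates asymptotically follow the perturbed ODE $\dot{\btheta} = -\Gamma(\br_\infty(\btheta))\nabla\bar{\alpha}(\btheta)$ up to an $O(\epsilon)$ bias. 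Taking $\bar{\alpha}$ itself as a Lyapunov function, were $\|\nabla\bar{\alpha}(\btheta_k)\|\ge\epsilon$ to hold eventually, $\bar{\alpha}(\btheta_k)$ would drift downward without bound along a subsequence, contradicting its non-negativity under Assumption \ref{C:FirstProperAss}; hence $\liminf_k\|\nabla\bar{\alpha}(\btheta_k)\| < \epsilon$ w.p.1.

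The main obstacle will be rigorously establishing the two-time-scale tracking $\br_k-\br_\infty(\btheta_k)\to 0$ under a drifting $\btheta_k$: this requires a uniform-in-$\btheta$ lower bound on the smallest singular value of $\bA(\btheta)$ (so that $\bA_k^{-1}$ stays bounded) together with uniform tail bounds on the eligibility trace $\bz_k$. Both ingredients follow from Lemma \ref{T:mainThmm} and the boundedness / continuous differentiability of $\bpsi_\btheta$, but the argument is delicate because the SSP setting (random episode lengths, absorbing $\bx^*$) forces one to work with excursion-based stationary measures rather than the straightforward ergodic averages used in the average-cost Konda--Tsitsiklis proof. Once these uniform bounds are in place, the rest of the argument is a careful transcription of that proof, with the $\lambda\to 1$ step absorbing the finite-basis approximation error into the slack $\epsilon$ in the theorem statement.
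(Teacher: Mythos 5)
Your proposal is essentially the paper's own argument: the paper proves critic convergence separately (Theorem 6.2 in the appendix, via the linear stochastic approximation conditions) and then disposes of actor convergence in one line by setting $\bphi_{\btheta}=\bpsi_{\btheta}$ and invoking the proof in Section~6 of the Konda--Tsitsiklis actor-critic paper, which is exactly the two-time-scale / $\lambda\to 1$ / ODE argument you reconstruct. Your sketch is a faithful (and considerably more explicit) expansion of that citation, including an honest flagging of the genuinely delicate step --- uniform invertibility of $\bar{\bG}(\btheta)$ and critic tracking under a drifting $\btheta_k$ in the episodic SSP setting --- which the paper itself does not elaborate beyond verifying the cited conditions.
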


\section{The MRP and its conversion into an SSP problem}
\label{sec:conversion}

In the MRP problem, we assume that there is a set of {\em unsafe} states which are set to be absorbing on the MDP ({\em i.e.}, there is only one control at each state, corresponding to a self-transition with probability $1$). Let $\mathbb{X}_G$ and $\mathbb{X}_U$ denote the set of goal states and unsafe states, respectively. A {\em safe} state is a state that is not unsafe. It is assumed that if the system is at a safe state, then there is at least one sequence of actions that can reach one of the states in $\mathbb X_{G}$ with positive probability. Note that this implies that Assumption \ref{C:FirstProperAss} holds. In the MRP, the goal is to find the optimal policy that maximizes the probability of reaching a state in $\mathbb X_{G}$ from a given initial state. Note that since the unsafe states are absorbing, to satisfy this specification the system must not visit the unsafe states. 

We now convert the MRP problem into an SSP problem, which requires us to change the original MDP (now denoted as MDP$_{\mathbb M}$) into a SSP MDP (denoted as MDP$_{\mathbb S}$).  Note that \cite{alterovitz2007stochastic} established the equivalence between an MRP problem and an SSP problem where the expected reward is maximized.  Here we present a different transformation where an MRP problem is converted to a more standard SSP problem where the expected cost is minimized. 


To begin, we denote the state space of MDP$_{\mathbb M}$ by $\mathbb X_{\mathbb M}$, and define $\mathbb{X}_{\mathbb S}$, the state space of MDP$_{\mathbb S}$, to be 
$$
\mathbb{X}_{\mathbb S}= (\mathbb{X}_{\mathbb M} \setminus \mathbb{X}_G) \cup \{\bx^*\},
$$
where $\bx^*$ denotes a special termination state. Let $\bx_0$ denote the initial state, and $\mathbb{U}$ denote the action space of MDP$_{\mathbb M}$.  We define the action space of MDP$_{\mathbb S}$ to be $\mathbb U$, {\it i.e.}, the same as for MDP$_{\mathbb M}$.

Let $p_{\mathbb M}(\bj|\bx,u)$ denote the probability of transition to state $\bj\in \mathbb X_{\mathbb M}$ if action $u$ is taken at state $\bx\in \mathbb X_{\mathbb M}$.  We now define the transition probability $p_{\mathbb S}(\bj|\bx,u)$ for all states $\bx,\bj\in \mathbb X_{\mathbb S}$ as:
\begin{eqnarray}
\label{E:tranprob}
p_{\mathbb S}(\bj|\bx,u)=\left\{\begin{array}{ll}
\displaystyle\sum_{\bi\in\mathbb{X}_G} p_{\mathbb M}(\bi|\bx,u), &\text{if } \bj=\bx^*,\\
p_{\mathbb M}(\bj|\bx,u), &\text{if } \bj\in \mathbb{X}_{\mathbb M}
\setminus \mathbb{X}_G, 
\end{array}\right.
\end{eqnarray}
for all $\bx\in \mathbb{X}_{\mathbb M} \setminus (\mathbb{X}_G\cup \mathbb X_{U})$ and all $u\in \mbb{U}$.  Furthermore, we set $p_{\mathbb S}(\bx^{*}|\bx^{*},u)=1$ and $p_{\mathbb S}(\bx_{0}|\bx,u)=1$ if $\bx\in \mathbb X_{U}$, for all $u\in \mathbb U$.  The transition probability of MDP$_{\mathbb S}$ is defined to be the same as for MDP$_{\mathbb M}$, except that the probability of visiting the goal states in MDP$_{\mathbb M}$ is changed into the probability of visiting the termination state; and the unsafe states transit to the initial state with probability $1$.



For all $\bx\in \mathbb X_{S}$, we define the cost $g(\bx,u)=1$ if $\bx \in \mathbb{X}_U$, and $g(\bx,u)=0$ otherwise. Define the expected total cost of a policy $\mu$ to be $\bar{\alpha}_{\mu}^{\mathbb S}=\lim_{t\to\infty} E\{\sum_{k=0}^{t-1} g(\bx_k,u_k)|\bx_0\}$ where actions $u_k$ are obtained according to policy $\mu$ in MDP$_{\mathbb S}$.   Moreover, for each policy $\mu$ on MDP$_{\mathbb S}$, we can define a policy on MDP$_{\mathbb M}$ to be the same as $\mu$ for all states $\bx\in \mathbb X_{\mathbb M}\setminus (\mathbb X_{G}\cup \mathbb X_{U})$.  Since actions are irrelevant at the goal and unsafe states in both MDPs, with slight abuse of notation we denote both policies to be $\mu$.  Finally, we define the \emph{Reachability Probability} $R^{\mathbb M}_{\mu}$ as the probability of reaching one of the goal states from $\bx_{0}$ under policy $\mu$ on MDP$_{\mathbb M}$.  The Lemma below relates $R^{\mathbb M}_{\mu}$ and $\bar\alpha^{\mathbb S}_{\mu}$:

\begin{lemma} For any RSP $\mu$, we have ${\text{R}}^{\mathbb M}_{\mu}=\frac{1}{\bar{\alpha}^{\mathbb S}_{\mu}+1}$.
\end{lemma}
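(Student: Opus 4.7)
The plan is to view a sample path in MDP$_{\mathbb S}$ under policy $\mu$ as a sequence of independent identically distributed ``attempts,'' where each attempt starts at $\bx_0$ and ends the first time the trajectory enters either $\mathbb X_G$ (which by (\ref{E:tranprob}) is relabelled as the termination state $\bx^*$ in MDP$_{\mathbb S}$) or $\mathbb X_U$ (which by construction resets the system deterministically to $\bx_0$). The key observation is that, by (\ref{E:tranprob}), the transition kernel of MDP$_{\mathbb S}$ restricted to states in $\mathbb X_{\mathbb M}\setminus(\mathbb X_G\cup\mathbb X_U)$ coincides with that of MDP$_{\mathbb M}$, and the probability of entering $\bx^*$ before an unsafe state on any single attempt is therefore exactly $R^{\mathbb M}_\mu$.

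Set $p := R^{\mathbb M}_\mu$. First I will argue that the successive attempts are i.i.d. using the strong Markov property together with the fact that whenever the trajectory hits $\mathbb X_U$ it returns to $\bx_0$ with probability one, so each attempt begins in the same state $\bx_0$ under the same stationary randomized policy $\mu$. Consequently, the number $N$ of attempts that terminate at an unsafe state before some attempt reaches $\bx^*$ is geometric with $\Pr(N=k)=(1-p)^k p$ for $k=0,1,2,\ldots$

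Next, since the cost function satisfies $g(\bx,u)=1$ if $\bx\in\mathbb X_U$ and $g(\bx,u)=0$ otherwise, the total cost accrued along the sample path equals the number of visits to $\mathbb X_U$, which is exactly $N$. Therefore
\[
\bar{\alpha}^{\mathbb S}_\mu \;=\; E[N]\;=\;\sum_{k=0}^{\infty} k\,(1-p)^k p \;=\;\frac{1-p}{p},
\]
and solving for $p$ yields $R^{\mathbb M}_\mu = p = 1/(\bar{\alpha}^{\mathbb S}_\mu + 1)$.

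The only delicate step is the i.i.d.\ decomposition into attempts, which needs a clean appeal to the strong Markov property at the stopping time of first entry into $\mathbb X_G\cup\mathbb X_U$; the degenerate cases $p=0$ (in which both sides vanish, with $\bar{\alpha}^{\mathbb S}_\mu=\infty$) and $p=1$ (in which $\bar{\alpha}^{\mathbb S}_\mu=0$) should also be checked for consistency with the formula. Everything else is a geometric-series calculation.
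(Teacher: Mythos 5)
Your proof is correct and follows essentially the same route as the paper's: both identify $\bar{\alpha}^{\mathbb S}_{\mu}$ with the expected number of failed ``attempts'' (visits to $\mathbb X_U$, each followed by a deterministic reset to $\bx_0$) before the first success (reaching $\bx^*$), recognize this as a geometric random variable with success probability $R^{\mathbb M}_{\mu}$, and compute $E[N]=(1-p)/p$. Your added care about the strong Markov property justifying the i.i.d.\ decomposition and the degenerate cases $p\in\{0,1\}$ is a welcome tightening of what the paper leaves implicit, but it is the same argument.
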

\begin{proof} From the definition of the $g(\bx,u)$, $\bar{\alpha}_{\mu}^{\mathbb S}$ is the expected number of times when unsafe states in $\mathbb X_{U}$ are visited before $\bx^{*}$ is reached.  From the construction of MDP$_{\mathbb S}$, reaching $\bx^*$ in MDP$_{\mathbb S}$ is equivalent to reaching one of the goal states in MDP$_{\mathbb M}$.   On the other hand, for MDP$_{\mathbb M}$, by definition of $\mathbb X_{G}$ and $\mathbb X_{U}$, in the Markov chain generated by $\mu$, the states $\mathbb X_{G}$ and $\mathbb X_{U}$ are the only absorbing states, and all other states are transient.  Thus, the probability of visiting a state in $\mathbb X_{U}$ from $\bx_{0}$ on MDP$_{\mathbb M}$ is $1-R_{\mu}^{\mathbb M}$, which is the same as the probability of visiting $\mathbb X_{U}$ for each run of MDP$_{\mathbb S}$, due to the construction of transition probabilities \eqref{E:tranprob}.  We can now consider a geometric distribution where the probability of success is $R_{\mu}^{\mathbb M}$.  Because $\bar{\alpha}_{\mu}^{\mathbb S}$ is the expected number of times when an unsafe state in $\mathbb X_{U}$ is visited before $\bx^{*}$ is reached, this is the same as the expected number of failures of Bernoulli trails (with probability of success being $R_{\mu}^{\mathbb M}$) before a success. This implies $\bar{\alpha}_{\mu}^{\mathbb S}=\frac{1-R_{\mu}^{\mathbb M}}{R_{\mu}^{\mathbb M}}$.  Rearranging $\bar{\alpha}_{\mu}^{\mathbb S}=\frac{1-R_{\mu}^{\mathbb M}}{R_{\mu}^{\mathbb M}}$ completes the proof.
\end{proof}

The above lemma means that $\mu$ as a solution to the SSP problem on MDP$_{\mathbb S}$ (minimizing $\bar{\alpha}_{\mu}^{\mathbb S}$) corresponds to a solution for the MRP problem on MDP$_{\mathbb M}$ (maximizing $R_{\mu}^{\mathbb M}$). Note that the algorithm uses a sequence of simulated trajectories, each of which starting at $\bx_0$ and ending as soon as $\bx^*$ is visited for the first time in the sequence. Once a trajectory is completed, the state of the system is reset to the initial state $\bx_0$ and the process is repeated. Thus, the actor-critic algorithm is applied to a modified version of the MDP$_{\mathbb S}$ where transition to a goal state is always followed by a transition to the initial state.


\section{Case study}
\label{S:case}
In this section we apply our algorithm to control a robot moving in a square-shaped mission environment, which is partitioned into 2500 smaller square regions (a $50\times 50$ grid) as shown in Fig.~\ref{robotEnvironment}. We model the motion of the robot in the environment as an MDP: each region corresponds to a state of the MDP, and in each region (state), the robot can take the following control primitives (actions): ``North'', ``East'', ``South'', ``West'', which represent the directions in which the robot intends to move (depending on the location of a region, some of these actions may not be enabled, for example, in the lower-left corner, only actions ``North'' and ``East'' are enabled).  These control primitives are not reliable and are subject to noise in actuation and possible surface roughness in the environment. Thus, for each motion primitive at a region, there is a probability that the robot enters an adjacent region.

\begin{figure}[ht]
\centering
\includegraphics[width=0.8\columnwidth]{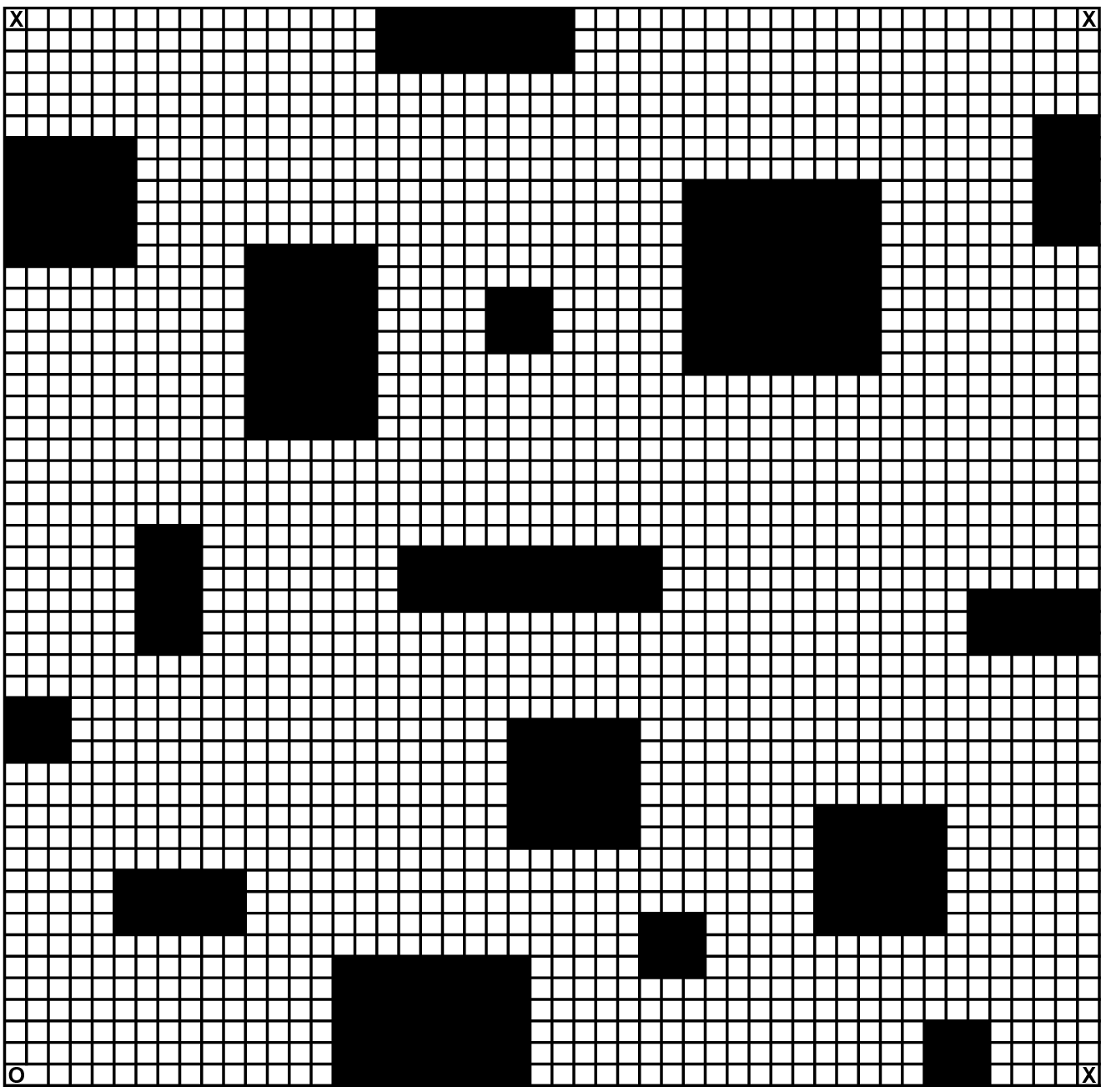}
\caption{View of the mission environment. The initial region is marked by o, the goal regions by x, and the unsafe regions are shown in black.}
\label{robotEnvironment}
\end{figure}

We label the region in the south-west corner as the initial state. We marked the regions located at the other three corners as the set of \emph{goal} states as shown in Fig.~\ref{robotEnvironment}. We assume that there is a set of {\em unsafe} states $\mathbb X_{U}$ in the environment (shown in black in Fig. \ref{robotEnvironment}). Our goal is to find the optimal policy that maximizes the probability of reaching a state in $\mathbb X_{G}$ (set of goal states) from the initial state (an instance of an MRP problem). 

\subsection{Designing an RSP}

To apply the LSTD Actor-Critic algorithm, the key step is to design an
RSP  $\mu_{\btheta}(u|\bx)$.  In this case study, we define the RSP to be an
exponential function of two scalar parameters $\theta_1$ and $\theta_2$,
respectively. These parameters are used to provide a balance between {\em safety} 
and {\em progress} from applying the control policy.

For each pair of states $\bx_{i},\bx_{j}\in \mathbb X$, we define $d(\bx_{i},\bx_{j})$ as the minimum number of transitions from $\bx_{i}$ and $\bx_{j}$.  We denote $\bx_{j}\in N(\bx_{i})$ if and only if $d(\bx_i,\bx_j) \leq r_n $, where $r_n$ is a fixed integer given apriori.  If $\bx_{j}\in N(\bx_{i})$, then we say $\bx_{i}$ is in the neighborhood of $\bx_{j}$, and $r_{n}$ represents the radius of the neighborhood around each state. 
 
For each state $\bx\in \mathbb X$, the safety score $s(\bx)$ is defined as the ratio of the safe neighbouring states over all neighboring states of $\bx$.  To be more specific, we define
\begin{equation}
\label{eq:safety}
s(\bx) = \frac{\sum_{\by
\in N(\bx)}I_{s}(\by)}{|N(\bx)|} 
\end{equation}
where $I_{s}(\by)$ is an indicator function such that $I_{s}(\by)=1$ if and only if $\by\in \mathbb X\setminus \mathbb X_{U}$ and $I_{s}(\by)=0$ if otherwise.
A higher safety score for the current state of robot means it is less likely for the robot to reach an unsafe region in the future.  
 
We define the progress score of a state $\bx\in \mathbb X$ as $d_g(\bx):=\min_{\by\in \mathbb X_{G}}d(\bx, \by)$, which is the minimum number of transitions from $\bx$ to any goal region.   We can now propose the RSP policy, which is a Boltzmann policy as defined in \eqref{eq:boltzmann}. Note that $\mathbb U=\{u_{1},u_{2},u_{3},u_{4}\}$, which corresponds to ``North'', ``East'', ``South'', and ``West'', respectively.  
We first define
\begin{equation}
\begin{split}
a_i (\btheta)= F_i(\bx)e^{\theta_1 E\{s(f(\bx,u_i))\}+ \theta_2 E\{d_{g}(f(\bx,u_i))-d_{g}(\bx)\}},
\end{split}
\end{equation}
where $\btheta:=(\theta_{1},\theta_{2})$, and $F_i(\bx)$ is an indicator function such that $F_{i}(\bx)=1$ if $u_{i}$ is available at $\bx_{i}$ and $F_{i}(\bx)=0$ if otherwise.  Note that the availability of control actions at a state is limited for the states at the boundary.  For example, at the initial state, which is at the lower-left corner, the set of available actions is $\{u_{1},u_{2}\}$, corresponding to ``North'' and ``East'', respectively.  If an action $u_{i}$ is not available at state $\bx$, we set $a_{i}(\btheta)=0$, which means that $\mu_{\btheta}(u_{i}|\bx)=0$.

Note that $a_{i}(\btheta)$ is defined to be the combination of the expected safety score of the next state applying control $u_{i}$, and the expected improved progress score from the current state applying $u_{i}$, weighted by $\theta_{1}$ and $\theta_{2}$.  The RSP is then given by
\begin{equation}
 \label{eq:rsp}
\bmu_{\btheta}(u_{i}|\bx) = \frac{a_{i}(\btheta)}{\sum_{i=1}^4 a_i(\btheta)}.
\end{equation}
We note that Assumption \ref{C:SecondProperAss} holds for the
 proposed RSP.  Moreover, Assumption \ref{C:FirstProperAss} also holds, therefore Theorem \ref{T:mainThmm}
 holds for this RSP.

\subsection{Generating transition probabilities}
To implement the LSTD Actor-Critic algorithm, we first constructed the MDP.  As mentioned above, this MDP represents the motion of the robot in the environment where each state corresponds to a cell in the environment (Fig. \ref{robotEnvironment}).  To capture the transition probabilities of the robot from a cell to its adjacent one under an action, we built a simulator.

The simulator uses a unicycle model (see, {\it e.g.}, \cite{lavalle2006planning}) for the dynamics of the robot with noisy sensors and actuators. In this model, the motion of the robot is determined by specifying a forward and an angular velocity.  At a given region, the robot implements one of the following four controllers (motion primitives) - ``East'', ``North'', ``West'', ``South''.  Each of these controllers operates by obtaining the difference between the current heading angle and the desired heading angle. Then, it is translated into a proportional feedback control law for angular velocity.  The desired heading angles for the ``East'', ``North'', ``West'', and ``South'' controllers are $0^\circ$, $90^\circ$, $180^\circ$, and $270^\circ$, respectively. Each controller also uses a constant forward velocity. The environment in the simulator is a 50 by 50 square grid as shown in Fig. \ref{robotEnvironment}. To each cell of the environment, we randomly assigned a surface roughness which affects the motion of the robot in that cell.  The perimeter of the environment is made of walls, and when the robot runs to them, it bounces with the mirror-angle.

To find the transition probabilities, we performed a total of 5000 simulations for each controller and state of the MDP.  In each trial, the robot was initialized at the center of the cell, and then an action was applied.  The robot moved in that cell according to its dynamics and surface roughness of the region.  As soon as the robot exited the cell, a transition was encountered.  Then, a reliable center-converging controller was automatically applied to steer the robot to the center of the new cell.  We assumed that the center-converging controller is reliable enough that always drives the robot to the center of the new cell before exiting it. Thus, the robot always started from the center of a cell.  This makes the process Markov (the probability of the current transition depends only the control and the current state, and not on the history up to the current state).  We also assumed perfect observation at the boundaries of the cells.  

It should be noted that, in general, it is not required to have all the transition probabilities of the model in order to apply the LSTD Actor-Critic algorithm, but rather, we only need transition probabilities along the trajectories of the system simulated while running the algorithm.  This becomes an important advantage in the case where the environment is large and obtaining all transition probabilities becomes infeasible. 

\subsection{Results}
We first obtained the exact optimal policy for this problem using the methods described in \cite{LaWaAnBe-ICRA10,Ding11}. 
The maximal reachability probability is 99.9988\%. We then used our LSTD actor-critic algorithm to optimize with respect to $\btheta$ as outlined in Sec. \ref{sec:actorCriticAlgo} and \ref{sec:conversion}.

Given $\btheta$, we can compute the exact probability of reaching $\mathbb X_{G}$ from any state $\bx\in \mathbb X$ applying the RSP $\mu_{\btheta}$ by solving the following set of linear equations
\begin{align}
\label{eq:RSPexactProb}
&&p_{\btheta}(\bx)=\sum_{u\in \mathbb U} \mu_{\btheta} (u|\bx)\sum_{\by\in \mathbb X} p(\by | \bx, u) p_{\btheta}(\by),\nonumber\\
 && \textrm{for all } x\in \mathbb X\setminus(\mathbb X_{U}\cup \mathbb X_{G})
\end{align}
such that $p_{\btheta}(\bx)=0$ if $\bx\in \mathbb X_{U}$ and $p_{\btheta}(\bx)=1$ if $\bx\in \mathbb X_{G}$.  Note that the equation system given by \eqref{eq:RSPexactProb} contains exactly $|\mathbb X|-|\mathbb X_{U}|-|\mathbb X_{G}|$ number of equations and unknowns.

We plotted in Fig.~\ref{small2} the reachability probability of the RSP from the initial state (i.e., $p_{\btheta}(\bx_{0})$) against the number of iterations in the actor-critical algorithm each time $\btheta$ is updated.  
As $\btheta$ converges, the reachability probability converges to 90.3\%.  The parameters for this examples are: $r_n = 2$, $\lambda=0.9$, $D=5$ and the initial $\btheta$ is $(50, -10)$.  We use \eqref{eq:examplebetak} for $\beta_{k}$ with $c=0.05$.

\begin{figure}[t]
\begin{center}
\includegraphics[scale=.58]{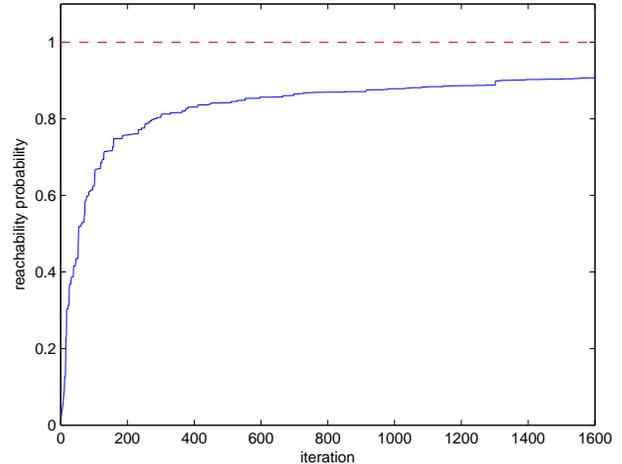}
\end{center}
\caption{The dashed line represents the optimal solution (the maximal reachability probability) and the solid line represents the exact reachability probability for the RSP as a function of the number of iterations applying the proposed algorithm.}
 \label{small2}
\end{figure}
\section{Conclusion}\label{S:conclusion}
We considered the problem of finding a control policy for a Markov Decision Process (MDP) to maximize the probability of reaching some states of the MDP while avoiding some other states. We presented a transformation of the problem into a Stochastic Shortest Path (SSP) MDP and developed a new approximate dynamic programming algorithm to solve this class of problems. The algorithm operates on a sample-path of the system and optimizes the policy within a pre-specified class parameterized by a parsimonious set of parameters. Simulation results confirm the effectiveness of the proposed solution in robot motion planning applications.

\section*{Appendix: Convergence of the LSTD Actor-Critic Algorithm}
We first cite the theory of {\em linear stochastic approximation driven
  by a slowly varying Markov chain} \cite{Konda} (with
simplifications).

Let $\{\by_k\}$ be a finite Markov chain whose transition
probabilities depend on a parameter $\btheta\in\mathbb{R}^n$.
Consider a generic iteration of the form
\begin{equation}\label{E:giterate}
\bs_{k+1}=\bs_k+\gamma_k (\bh_{\btheta_k}(\by_{k+1})-
\bG_{\btheta_k}(\by_{k+1})\bs_k)+\gamma_k\bXi_{k}\bs_k, 
\end{equation}
where $\bs_k\in\mathbb{R}^m$, and $\bh_{\btheta}(\cdot)\in \mbb{R}^m,
\bG_{\btheta}(\cdot)\in \mbb{R}^{m\times m}$ are $\btheta$-parameterized
vector and matrix functions, respectively. It has been shown in
\cite{Konda} that the critic in (\ref{E:giterate}) converges if the
following set of conditions are met.

\begin{cond}\label{C:slowV}
\begin{enumerate}
\item The sequence $\{\gamma_k\}$ is deterministic, non-increasing, and
$$\sum_k \gamma_k=\infty,\quad \sum_k \gamma_k^2 < \infty.$$
\item The random sequence $\{\btheta_k\}$ satisfies
  $||\btheta_{k+1}-\btheta_k||\le\beta_kH_k$ for some process $\{H_k\}$ with
  bounded moments, where $\{\beta_k\}$ is a deterministic sequence such that
$$\sum_k \bigg(\frac{\beta_k}{\gamma_k}\bigg)^d <\infty\quad\text{for some }d>0.$$
\item $\bXi_k$ is an $m\times m$-matrix valued martingale difference
  with bounded moments. 
\item For each ${\btheta}$, there exist $\bar{\bh}({\btheta})\in\mathbb{R}^m$,
  $\bar{\bG}({\btheta})\in\mathbb{R}^{m \times m}$, and corresponding
  $m$-vector and $m\times m$-matrix functions 
  $\hat{\bh}_{\btheta}(\cdot)$, $\hat{\bG}_{\btheta}(\cdot)$ that
  satisfy the 
  Poisson equation. That is, for each $\by$,
$$\hat{\bh}_{\btheta}(\by)= \bh_{\btheta}(\by) -
  \bar{\bh}({\btheta})+(P_{\btheta}\hat{\bh}_{\btheta})(\by),$$ 
$$\hat{\bG}_{\btheta}(\by)=\bG_{\btheta}(\by) -
  \bar{\bG}({\btheta})+(P_{\btheta}\hat{\bG}_{\btheta})(\by).$$ 
\item For some constant $C$ and for all ${\btheta}$, we have
$\max (|| \bar{\bh}({\btheta})|| , ||\bar{\bG}({\btheta})||)\leq C.$
\item For any $d>0$, there exists $C_d>0$ such that
$\sup_k \mathbf{E}[|| \bbf_{{\btheta}_k}(\by_k)||^d]\leq C_d,$
where $\bbf_{\btheta}(\cdot)$ represents any of the functions
$\hat{\bh}_{\btheta}(\cdot)$, $\bh_{\btheta}(\cdot)$, $\hat{\bG}_{\btheta}(\cdot)$
and $\bG_{\btheta}(\cdot)$. 
\item For some constant $C>0$ and for all
  ${\btheta},\bar{{\btheta}}\in\mathbb{R}^n$,  
$\max (|| \bar{\bh}({\btheta})-\bar{\bh}(\bar{{\btheta}})|| ,
  ||\bar{\bG}({\btheta})-\bar{\bG}(\bar{{\btheta}})||)\leq C|| {\btheta}
  - \bar{{\btheta}}||.$ 
\item There exists a positive measurable function $C(\cdot)$ such
  that for every $d>0$, $\sup_k \mathbf{E}[C(\by_k)^d]<\infty$, and $||
  \bbf_{\btheta}(\by)-\bbf_{\bar{{\btheta}}}(\by)||\leq C(\by)|| {\btheta} -
  \bar{{\btheta}}||.$ 
\item There exists $a>0$ such that for all $\bs\in \mathbb{R}^m$ and
  ${\btheta}\in \mathbb{R}^n$ 
$$\bs'\bar{\bG}({\btheta})\bs\geq a||\bs||^2.$$
\end{enumerate}
\end{cond}
For now, let's focus on
the first two items of Condition~\ref{C:slowV}. Recall that for any
matrix $\bA$, $\bv(\bA)$ is a column vector that stacks all row vectors
of $\bA$ (also written as column vectors).  Simple algebra suggests that
the core iteration of the LSTD critic can be written as
(\ref{E:giterate}) with
\begin{gather}
\bs_k=\left[\begin{array}{c} \bb_k\\ \bv(\bA_k)
    \\1\end{array}\right],\qquad 
\by_k=(\bx_k,u_k,\bz_k), \nonumber \\
\begin{array}{c}
\bh_{\btheta}(\by)=\left[\begin{array}{c}
g(\bx,u)\bz\\\bv(\bz((P_{\btheta}\bpsi_{\btheta}')(\bx,u) -
\bpsi_{\btheta}'(\bx,u)))\\1\end{array}\right],\\ 
\bG_{\btheta}(\by)=\left[\begin{array}{c} \bI \end{array}\right],
\end{array}\label{E:LSTDiterate}\\
\begin{array}{l}
\bXi_k = 
\left[\begin{array}{ccc} \bzero&\bzero&\bzero\\\bzero&\bzero&
   D\\\bzero &\bzero
    &\bzero\end{array}\right],
\end{array} \nonumber
\end{gather}
where 
$$
D=\bv(\bz_k(\bpsi_{\btheta_k}'(\bx_{k+1},u_{k+1})-(P_{\btheta}\bpsi_{\btheta})'(\bx_k,u_k))),
$$
and $M$ is an arbitrary (large) positive constant whose role is to
facilitate the convergence proof, and $\by=(\bx,u,\bz)$ denotes a value
of the triplet $\by_k$.

The step-sizes $\gamma_k$ and $\beta_k$ in (\ref{E:critic}) and
(\ref{E:actor}) correspond 
exactly to the $\gamma_k$ and $\beta_k$ in Condition \ref{C:slowV}.(1)
and \ref{C:slowV}.(2), respectively. If the MDP has finite state and
action space, then the conditions on $\{\beta_k\}$ reduce to
(\cite{Konda}) 
\begin{equation}\label{E:kbetan}
\sum_k \beta_k=\infty,\quad \sum_k \beta_k^2 < \infty,\quad \lim_{k\to\infty}\frac{\beta_k}{\gamma_k}=0,
\end{equation}
where $\{\beta_k\}$ is a deterministic and non-increasing sequence. Note that we can use
$\gamma_k=1/k$ (cf. Condition \ref{C:slowV}). 
The following theorem establishes
the convergence of the critic.

\begin{thm}\label{T:critic}[Critic Convergence] For the LSTD
  actor-critic (\ref{E:critic}) and (\ref{critic-r}) with some step-size
  sequence $\{\beta_k\}$ satisfying (\ref{E:kbetan}), the sequence $\bs_k$ is bounded, and 
\begin{equation}\label{E:gesti}
\lim_{k\to\infty}|\bar{\bG}(\btheta_k)\bs_k - \bar{\bh}(\btheta_k)|=0.
\end{equation}
\end{thm}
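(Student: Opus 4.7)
The plan is to verify each item of Condition~\ref{C:slowV} for the critic iteration once it has been recast in the form (\ref{E:giterate}) via the identifications in (\ref{E:LSTDiterate}), and then invoke the convergence result for linear stochastic approximation driven by a slowly varying Markov chain from \cite{Konda}. Since $\mathbb{X}$ and $\mathbb{U}$ are finite, items~1 and~2 of Condition~\ref{C:slowV} reduce, as noted in the remark preceding the theorem, to $\gamma_k=1/k$ together with (\ref{E:kbetan}), which is exactly what is assumed.

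First I would clear away the structurally simple items. The recursion $\bz_{k+1}=\lambda\bz_k+\bpsi_{\btheta_k}(\bx_k,u_k)$ with $\lambda\in[0,1)$ and the standing uniform bound on $\bpsi_{\btheta}$ give $\sup_k\|\bz_k\|\le\|\bpsi\|_{\infty}/(1-\lambda)$; combined with finiteness of $\mathbb{X}\times\mathbb{U}$, this yields item~6 for every relevant $\bbf_{\btheta}$. In (\ref{E:LSTDiterate}) the matrix component is $\bG_{\btheta}(\by)=\bI$, hence $\bar{\bG}(\btheta)=\bI$, and item~9 holds trivially with $a=1$; item~5 then follows because $\bar{\bh}(\btheta)$ is a bounded expectation of bounded quantities. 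The Lipschitz items~7 and~8 follow by combining the assumed continuous differentiability of $\btheta\mapsto\bpsi_{\btheta}$ with standard perturbation bounds for ergodic finite Markov chains.

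Item~4 requires that, for every fixed $\btheta$, the joint chain $\by_k=(\bx_k,u_k,\bz_k)$ admit bounded and Lipschitz Poisson solutions $\hat{\bh}_{\btheta}$, $\hat{\bG}_{\btheta}$. Here I would combine Lemma~\ref{T:mainThmm}, which asserts that $\mu_{\btheta}$ is proper for every $\btheta$, with the restart device of the algorithm that resets the trajectory to $\bx_0$ once $\bx^*$ is visited: this makes $\{\by_k\}$ regenerative with a geometrically tailed return time to $(\bx_0,\cdot,\bzero)$, while Assumption~\ref{C:SecondProperAss} ensures irreducibility on the reachable part of the state-action space. Geometric ergodicity of the underlying $(\bx_k,u_k)$ chain, lifted through the deterministic $\bz_k$-recursion, then gives the required unique invariant distribution and Poisson solutions.

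The main obstacle is item~3, the martingale-difference structure of $\bXi_k$. Taking $\mathcal{F}_k=\sigma(\bx_0,u_0,\ldots,\bx_k,u_k,\bz_k,\btheta_k)$, the definition of $P_{\btheta}$ in Section~\ref{S:formulation} yields
$$\mathbf{E}[\bpsi_{\btheta_k}(\bx_{k+1},u_{k+1})\mid\mathcal{F}_k]=(P_{\btheta_k}\bpsi_{\btheta_k})(\bx_k,u_k),$$
so that the only nontrivial block
$$D=\bv\bigl(\bz_k\bigl(\bpsi_{\btheta_k}'(\bx_{k+1},u_{k+1})-(P_{\btheta_k}\bpsi_{\btheta_k})'(\bx_k,u_k)\bigr)\bigr)$$
of $\bXi_k$ has vanishing conditional mean because $\bz_k$ is $\mathcal{F}_k$-measurable; its moments are bounded by the uniform bounds already obtained for $\bz_k$ and $\bpsi_{\btheta}$. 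With all nine items of Condition~\ref{C:slowV} in hand, the cited theorem of \cite{Konda} delivers both boundedness of $\{\bs_k\}$ and the asymptotic tracking conclusion (\ref{E:gesti}), finishing the proof.
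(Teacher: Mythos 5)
Your proposal is sound and lands on the same underlying machinery --- recasting the critic as the linear stochastic approximation (\ref{E:giterate}) and invoking the convergence theorem of \cite{Konda} once Condition~\ref{C:slowV} holds --- but you verify the nine items head-on, whereas the paper deliberately does not. The paper's key move is a \emph{reduction}: it observes that \cite{Konda03} already verified items (3)--(8) for the TD($\lambda$) critic with the building blocks $\tilde{\bh}^{(2)}_{\btheta}(\by)=g(\bx,u)\bz$, $\bF_{\btheta}(\by)=\bz(\bpsi_{\btheta}'(\bx,u)-(P_{\btheta}\bpsi_{\btheta})'(\bx,u))$ and $\tilde{\bXi}_k$, notes that the LSTD iterate is assembled linearly from exactly these same quantities, and concludes that (3)--(8) transfer; only item (9) is checked directly (trivially, since $\bG_{\btheta}(\by)=\bI$), exactly as you do. Your direct route buys self-containedness and makes explicit what the reduction hides --- in particular your item~(3) computation that $D$ is a martingale difference because $\mathbf{E}[\bpsi_{\btheta_k}(\bx_{k+1},u_{k+1})\mid\mathcal{F}_k]=(P_{\btheta_k}\bpsi_{\btheta_k})(\bx_k,u_k)$ and $\bz_k$ is $\mathcal{F}_k$-measurable, and your bound $\sup_k\|\bz_k\|\le\|\bpsi\|_{\infty}/(1-\lambda)$, are genuine content that the paper never writes down. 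The cost is that the hardest items, (4), (7) and (8) --- existence, boundedness and Lipschitz continuity in $\btheta$ of the Poisson solutions for the non-finite chain $\by_k=(\bx_k,u_k,\bz_k)$ --- are only asserted in your sketch (``geometric ergodicity lifted through the $\bz_k$-recursion,'' ``standard perturbation bounds''); fleshing these out rigorously, including uniformity over $\btheta$, is precisely the work of \cite{Konda03} that the paper's comparison argument is designed to reuse wholesale. So neither proof is complete without that reference; yours just imports less of it.
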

\begin{proof}
To show that (\ref{E:giterate}) converges with
$\bs,\by,\bh_{\btheta}(\cdot),\bG_{\btheta}(\cdot)$ and $\bXi$
substituted by (\ref{E:LSTDiterate}), the conditions
\ref{C:slowV}.(1)-(9) should be checked. However, a
comparison with the convergence proof for the TD($\lambda$) critic in
\cite{Konda03} gives a simpler
proof. Let $$\bF_{\btheta}(\by)=\bz(\bpsi_{\btheta}'(\bx,u)-
(P_{\btheta}\bpsi_{\btheta})'(\bx,u)).$$ While proving the convergence
of TD($\lambda$) critic operating concurrently with the actor,
\cite{Konda03} showed that
\[
\tilde{\bh}_{\btheta}(\by)=\left[\begin{array}{c}
    \tilde{h}^{(1)}_{\btheta}(\by)\\\tilde{\bh}^{(2)}_{\btheta}(\by)\end{array}\right]=\left[\begin{array}{c}
    Mg(\bx,u)\\g(\bx,u)\bz\end{array}\right],
    \]
    \[
\tilde{\bG}_{\btheta}(\by)=\left[\begin{array}{cc} 1 & \bzero\\\bz/M &
    \bF_{\btheta}(\by)\end{array}\right],
\]
and
$$\tilde{\bXi}_k=\left[\begin{array}{cc} \bzero & \bzero\\\bzero &
    \bz_k(\bpsi_{\btheta_k}'(\bx_{k+1},u_{k+1})-
    (P_{\btheta}\bpsi_{\btheta})'(\bx_k,u_k))\end{array}\right]$$  
satisfy Condition \ref{C:slowV}.(3)-\ref{C:slowV}(8). In our case,
(\ref{E:LSTDiterate}) 
can be rewritten as
\begin{equation}
\bh_{\btheta}(\by)=\left[\begin{array}{c}\tilde{\bh}^{(2)}_{\btheta}(\by)\\-
    \bF_{\btheta}(\by)\\1\end{array}\right],\;  
\bG_{\btheta}(\by)=\left[\begin{array}{c}\bI\end{array}\right],\; 
\bXi_k=\left[\begin{array}{c}
    \tilde{\bXi}_k\\\bzero\end{array}\right].
\end{equation}
Note that although the
two iterates are very different, they involve the same quantities and
both in a linear fashion. So,
$\bh_{\btheta}(\cdot),\bG_{\btheta}(\cdot)$ and $\bXi_k$ also satisfy
conditions \ref{C:slowV}.(3)-\ref{C:slowV}(8). Meanwhile, the step-size
$\{\gamma_k\}$ satisfies condition \ref{C:slowV}.(1), and the step-size
$\{\beta_k\}$ satisfies Eq. (\ref{E:kbetan}) (which is as explained above
implies condition \ref{C:slowV}.(2)). Now, only condition (9) remains to
be checked. To that end, note that all diagonal elements of
$\bG_{\btheta}(\by)$ equal to one, so, $\bG_{\btheta}(\by)$ is positive
definite. This proves the
convergence. Using the same correspondence and the result in
\cite{Konda03}, one can further check that (\ref{E:gesti}) also
holds here. 
\end{proof}

\subsection*{Proof of Theorem~\ref{T:actor}:} The result follows by setting $\bphi_{\btheta}=\bpsi_{\btheta}$ and
following the proof in Section~6 of \cite{Konda03}. \qed

\bibliographystyle{IEEEtran}

\bibliography{/home/yannisp/Private/bib/abbrev,/home/yannisp/Private/bib/IEEEabrv,/home/yannisp/Private/bib/communications,/home/yannisp/Private/bib/my,/home/yannisp/Private/bib/optimization,/home/yannisp/Private/bib/stochastics,actor-critic,Papers}

\begin{thebibliography}{10}
\providecommand{\url}[1]{#1}
\csname url@samestyle\endcsname
\providecommand{\newblock}{\relax}
\providecommand{\bibinfo}[2]{#2}
\providecommand{\BIBentrySTDinterwordspacing}{\spaceskip=0pt\relax}
\providecommand{\BIBentryALTinterwordstretchfactor}{4}
\providecommand{\BIBentryALTinterwordspacing}{\spaceskip=\fontdimen2\font plus
\BIBentryALTinterwordstretchfactor\fontdimen3\font minus
  \fontdimen4\font\relax}
\providecommand{\BIBforeignlanguage}[2]{{%
\expandafter\ifx\csname l@#1\endcsname\relax
\typeout{** WARNING: IEEEtran.bst: No hyphenation pattern has been}%
\typeout{** loaded for the language `#1'. Using the pattern for}%
\typeout{** the default language instead.}%
\else
\language=\csname l@#1\endcsname
\fi
#2}}
\providecommand{\BIBdecl}{\relax}
\BIBdecl

\bibitem{temizercollision}
S.~Temizer, M.~Kochenderfer, L.~Kaelbling, T.~Lozano-P{\'e}rez, and J.~Kuchar,
  ``Collision avoidance for unmanned aircraft using {M}arkov decision
  processes.''

\bibitem{LaWaAnBe-ICRA10}
M.~Lahijanian, J.~Wasniewski, S.~B. Andersson, and C.~Belta, ``Motion planning
  and control from temporal logic specifications with probabilistic
  satisfaction guarantees,'' in \emph{{IEEE} Int. Conf. on Robotics and
  Automation}, Anchorage, AK, 2010, pp. 3227 -- 3232.

\bibitem{alterovitz2007stochastic}
R.~Alterovitz, T.~Sim{\'e}on, and K.~Goldberg, ``The stochastic motion roadmap:
  A sampling framework for planning with {M}arkov motion uncertainty,'' in
  \emph{Robotics: Science and Systems}.\hskip 1em plus 0.5em minus 0.4em\relax
  Citeseer, 2007.

\bibitem{baier2008principles}
C.~Baier, J.-P. Katoen, and K.~G. Larsen, \emph{Principles of Model
  Checking}.\hskip 1em plus 0.5em minus 0.4em\relax MIT Press, 2008.

\bibitem{Ding11}
X.~Ding, S.~Smith, C.~Belta, and D.~Rus, ``{LTL} control in uncertain
  environments with probabilistic satisfaction guarantees,'' in \emph{IFAC},
  2011.

\bibitem{e4}
J.~Peters and S.~Schaal, ``Policy gradient methods for robotics,'' in
  \emph{Proceedings of the 2006 {IEEE}/{RSJ} International Conference on
  Intelligent Robots and Systems}, 2006.

\bibitem{e1}
K.~Samejima and T.~Omori, ``Adaptive internal state space construction method
  for reinforcement learning of a real-world agent,'' \emph{Neural Networks},
  vol.~12, pp. 1143--1155, 1999.

\bibitem{e2}
H.~Berenji and D.~Vengerov, ``A convergent {A}ctor{-C}ritic{-b}ased {FRL}
  algorithm with application to power management of wireless transmitters,''
  \emph{IEEE Transactions on Fuzzy Systems}, vol.~11, no.~4, pp. 478--485,
  2003.

\bibitem{e5}
``Actor-critic models of reinforcement learning in the basal ganglia: From
  natural to artificial rats,'' \emph{Adaptive Behavior}, vol.~13, no.~2, pp.
  131--148, 2005.

\bibitem{e3}
G.~Gajjar, S.~Khaparde, P.~Nagaraju, and S.~Soman, ``Application of
  actor-critic learning algorithm for optimal bidding problem of a {G}en{C}o,''
  \emph{IEEE Transactions on Power Engineering Review}, vol.~18, no.~1, pp.
  11--18, 2003.

\bibitem{Konda03}
V.~R. Konda and J.~N. Tsitsiklis, ``On actor-critic algorithms,'' \emph{SIAM
  Journal on Control and Optimization}, vol.~42, no.~4, pp. 1143--1166, 2003.

\bibitem{Bradtke96}
S.~Bradtke and A.~Barto, ``Linear least-squares algorithms for temporal
  difference learning,'' \emph{Machine Learning}, vol.~22, no.~2, pp. 33--57,
  1996.

\bibitem{Konda}
V.~R. Konda, ``Actor-critic algorithms,'' Ph.D. dissertation, MIT, Cambridge,
  MA, 2002.

\bibitem{BI96}
D.~Bertsekas and S.~Ioffe, ``Temporal differences-based policy iteration and
  applications in neuro-dynamic programming,'' LIDS REPORT, Tech. Rep. 2349,
  1996, mIT.

\bibitem{Peters08}
J.~Peters and S.~Schaal, ``Natural actor-critic,'' \emph{Neurocomputing},
  vol.~71, pp. 1180--1190, 2008.

\bibitem{Bertsekas-Book}
D.~Bertsekas, \emph{Dynamic Programming and Optimal Control}.\hskip 1em plus
  0.5em minus 0.4em\relax Athena Scientific, 1995.

\bibitem{Sutton98}
R.~S. Sutton and A.~G. Barto, \emph{Reinforcement Learning: An
  Introduction}.\hskip 1em plus 0.5em minus 0.4em\relax MIT Press, 1998.

\bibitem{techreportBu2011_actorcritic}
R.~M. Estanjini, X.~C. Ding, M.~Lahijanian, J.~Wang, C.~A. Belta, and I.~C.
  Paschalidis, ``Least squares temporal difference actor-critic methods with
  applications to robot motion control,'' 2011, available at
  \url{http://arxiv.org/submit/0304711}.

\bibitem{lavalle2006planning}
S.~LaValle, \emph{Planning algorithms}.\hskip 1em plus 0.5em minus 0.4em\relax
  Cambridge University Press, 2006.

\end{thebibliography}
}

\end{document}